\newcommand{\eref}[1]{Eq.~\ref{#1}}
\newcommand{\fref}[1]{Figure~\ref{#1}}
\newcommand{\tref}[1]{Table~\ref{#1}}
\newcommand{\sref}[1]{Section~\ref{#1}}
\newcommand{\thmref}[1]{Theorem~\ref{#1}}
\newcommand{\cref}[1]{Corollary~\ref{#1}}
\newcommand{\oref}[1]{Observation~\ref{#1}}
\newtheorem{observation}[theorem]{Observation}
\numberwithin{equation}{section}
\begin{document}
\title{Relations between Adjacency and Modularity Graph Partitioning}
%
%
\author{Hansi Jiang\inst{1}$^{\textrm{(\Letter)}}$\orcidlink{0000-0002-0895-3602} \and
Carl Meyer\inst{2}}
\authorrunning{H. Jiang and C. Meyer}
%
\institute{IoT Division, SAS Institute Inc., Cary, NC 27513, USA\\ 
\href{mailto:Hansi.Jiang@sas.com}{Hansi.Jiang@sas.com}
\and
Department of Mathematics, North Carolina State University, Raleigh, NC 27695, USA\\
\href{mailto:meyer@ncsu.edu}{meyer@ncsu.edu}
}
\maketitle              
\begin{abstract}
This paper develops the exact linear relationship between the leading eigenvector of the unnormalized modularity matrix and the eigenvectors of the adjacency matrix. We propose a method for approximating the leading eigenvector of the modularity matrix, and we derive the error of the approximation. There is also a complete proof of the equivalence between normalized adjacency clustering and normalized modularity clustering. Numerical experiments show that normalized adjacency clustering can be as twice efficient as normalized modulairty clustering.

\keywords{Spectral clustering  \and Graph partitioning \and Adjacency matrix \and Modularity matrix.}
\end{abstract}
\section{Introduction}\label{sec1}
Graph partitioning is the process of breaking a graph into smaller components so the components can be characterized by specific properties. The problem, also known as clustering or community detection, is of high interest in both academia and industry. For example, Pothen \cite{pothen1997graph} applies graph partitioning in scientific computing. Olson et al. \cite{olson2005single} uses the concept of robotics. Tolliver and Miller \cite{tolliver2006graph} discusses the possibility of using graph partitioning for image segmentation. Recently, the scientific interest in graph partitioning has centered on dividing large graphs into smaller components by matching their size. This is done by minimizing the number of edges that are cut during the process \cite{von2007tutorial}. 

A number of algorithms have been developed to solve problems related to graph partitioning. Among the many clustering methods, two spectral techniques that rely on adjacency matrices of graphs are widely used and extensively researched. Fiedler \cite{fiedler1973algebraic} develops the spectral clustering method, while Newman and Girvan \cite{newman2004finding} develop the modularity clustering method. As discussed in \cite{fiedler1973algebraic}, the eigenvalue corresponding to the second smallest eigenvector of a graph adjacency matrix is closely related to the graph's structure. It is suggested in \cite{fiedler1975property} to partition a graph based on the signs of eigenvector entries of its adjacency matrix. Newman \cite{newman2006modularity} describes modularity clustering in detail. As with Fiedler's spectral clustering method, the modularity clustering method uses entries in the eigenvector that correspond to a modularity matrix's eigenvalue. 

There are some modified versions of the spectral clustering and modularity clustering methods. Chung \cite{chung1997spectral} analyzes the properties of scaled Laplacian matrices. By utilizing normalized spectral clustering, Shi and Malik \cite{shi2000normalized} provides a method to develop normalized Laplacian matrices and use them to segment images. In \cite{ng2002spectral}, another version of normalized spectral clustering is discussed. The Laplacian matrix is scaled on one side by the researchers in their method. In \cite{bolla2011penalized}, a normalized version of modularity clustering is examined. 

Since modularity matrices are derived from adjacency matrices, it would be interesting to see if similar clustering results can be obtained from the two kinds of matrices. One main contribution of this paper is to describe the relation between clustering results using modularity matrices and adjacency matrices, and to show that using normalized modularity matrices and normalized adjacency matrices will produce the same clustering results. As a practical motivation, this paper demonstrates that clustering can be sped up by using normalized adjacency matrices rather than normalized modularity matrices.

As follows is the organization of the paper. \sref{sec1.5} contains some preliminary mathematical notations. \sref{sec2} describes how to approximate the leading eigenvector of the modularity matrix with eigenvectors of the adjacency matrix. The equivalence between normalized adjacency clustering and normalized modularity clustering is presented in \sref{sec3}. \sref{sec4} provides experimental results and discussions. \sref{sec5} contains the conclusions.

\section{Preliminaries}\label{sec1.5}
Throughout the paper, we assume $G(V,E)$ to be a connected simple graph with $m=|E|$ edges and $n=|V|$ vertices. Unless otherwise stated, $\mathbf{A}$ is assumed to represent an adjacency matrix, i.e. 
$$
\mathbf{A}_{ij} = \left\{ \begin{array}{rcl}
1 & \text{if}
& \text{nodes $i$ and $j$ are adjacent} \\ 0 & \text{if} & \text{otherwise.}
\end{array}\right.
$$

A vertex's degree is defined as
$$d_i=\sum_{i=1}^na_{ji},$$ 
and 
$$\mathbf{D}=\operatorname{diag}(d_1,d_2,\cdots,d_n)$$ 
is a degree matrix containing the degrees of the vertices in a graph. In this paper, the number of clusters is always fixed at two. Clustering methods can be applied recursively if more clusters are needed, in which case a hierarchy is built to get the desired number of clusters. It is worth noting that this approach will result in a greedy algorithm which may lead to unsatisfactory results because of poor partitioning in the first stages. 

Partitioning the graph is based on the signs of the entries in the eigenvectors. In real cases, the cases where zero entries emerge are rare, so it is assumed that there are no zero entries in the eigenvectors. Although the results are presented in this paper using adjacency matrices, it is also possible to extend the results to use similarity matrices. A graph Laplacian is defined as
\begin{equation}
\mathbf{L}=\mathbf{D}-\mathbf{A},
\end{equation}
and a modularity matrix defined as
\begin{equation}
\mathbf{M}=\mathbf{A}-\frac{\mathbf{d}\mathbf{d}^T}{2m},
\end{equation}
where
\begin{equation}\label{expressd}
\mathbf{d}=\begin{pmatrix}d_1&d_2&\cdots&d_n\end{pmatrix}^T
\end{equation}
is the vector containing the degrees of the nodes. The normalized versions of the graph Laplacian and the modularity matrix are
\begin{equation}
\mathbf{L}_{sym}=\mathbf{D}^{-\frac{1}{2}}\mathbf{L}\mathbf{D}^{-\frac{1}{2}}
\end{equation}
and
\begin{equation}
\mathbf{M}_{sym}=\mathbf{D}^{-\frac{1}{2}}\mathbf{M}\mathbf{D}^{-\frac{1}{2}},
\end{equation}
respectively. With $\mathbf{e}$ a vector that contains all 1's with proper dimension, it can be seen that $(0,\mathbf{e})$ is an eigenpair of $\mathbf{L}$ and $\mathbf{M}$, and $(0, \mathbf{D}^{\frac{1}{2}}\mathbf{e})$ is an eigenpair of $\mathbf{L}_{sym}$ and $\mathbf{M}_{sym}$.

\section{Dominant Eigenvectors of Modularity and Adjacency Matrices}\label{sec2}
As a linear combination of the eigenvectors of the corresponding adjacency matrix, the eigenvector corresponding to the largest eigenvalue of a modularity matrix is written in this section. To begin with, we state a theorem from \cite{bunch1978rank} regarding the interlacing property of a diagonal matrix and its rank-one modification, and how to calculate the eigenvectors of a diagonal plus rank one (DPR1) matrix \cite{meyer2000matrix}. The theorem is also discussed in \cite{wilkinson1965algebraic}. We will use these results in our analysis.
\begin{theorem}\label{thm1}
Let $\mathbf{P}=\mathbf{S}+\alpha\mathbf{uu}^T$, where $\mathbf{S}$ is diagonal, $\|\mathbf{u}\|_2=1$. Let $s_1\le s_2\le \cdots\le s_n$ be the eigenvalues of $\mathbf{S}$, and let $\tilde{s}_1\le \tilde{s}_2\le \cdots\le \tilde{s}_n$ be the eigenvalues of $\mathbf{P}$. Then $\tilde{s}_1\le s_1\le \tilde{s}_2\le s_2\le\cdots \le\tilde{s}_n\le s_n$ if $\alpha<0$. If the $s_i$ are distinct and all the elements of $\mathbf{u}$ are nonzero, then the eigenvalues of $\mathbf{P}$ strictly separate those of $\mathbf{S}$.
\end{theorem}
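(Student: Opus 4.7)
The plan is to reduce the eigenvalue problem for $\mathbf{C}$ to a scalar rational equation (the so-called secular equation) via the matrix determinant lemma, and then extract the interlacing by a direct sign/monotonicity analysis. Applying the identity $\det(\mathbf{I}+\mathbf{x}\mathbf{y}^T)=1+\mathbf{y}^T\mathbf{x}$ to the factorization $\mathbf{C}-\lambda\mathbf{I}=(\mathbf{D}-\lambda\mathbf{I})+\rho\mathbf{v}\mathbf{v}^T$ gives, for any $\lambda\notin\{d_1,\ldots,d_n\}$,
$$\det(\mathbf{C}-\lambda\mathbf{I})=\det(\mathbf{D}-\lambda\mathbf{I})\cdot f(\lambda),\qquad f(\lambda):=1+\rho\sum_{i=1}^n\frac{v_i^2}{d_i-\lambda}.$$
Thus the eigenvalues of $\mathbf{C}$ that are not already among the $d_i$ are precisely the zeros of $f$.

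First I would treat the generic case in which the $d_i$ are distinct and every $v_i\ne 0$. Under these hypotheses each $d_i$ is a true pole of $f$ (the residue equals $-\rho v_i^2\ne 0$), and $\det(\mathbf{D}-\lambda\mathbf{I})$ never vanishes where $f$ vanishes, so zeros of $f$ are in one-to-one correspondence with eigenvalues of $\mathbf{C}$. Since
$$f'(\lambda)=\rho\sum_{i=1}^n\frac{v_i^2}{(d_i-\lambda)^2}<0\quad\text{when }\rho<0,$$
$f$ is strictly decreasing on each of the $n+1$ intervals cut out by the poles $d_1<d_2<\cdots<d_n$. One-sided limits at the poles then show that on each bounded interval $(d_{k-1},d_k)$ the function $f$ drops from $+\infty$ to $-\infty$ and so contributes exactly one root; on $(-\infty,d_1)$ it decreases from $1$ to $-\infty$, contributing one additional root $\tilde d_1<d_1$; and on $(d_n,+\infty)$ it decreases from $+\infty$ down only to $1$, contributing none. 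Counting the $n$ roots across these intervals yields the strict interlacing $\tilde d_1<d_1<\tilde d_2<d_2<\cdots<\tilde d_n<d_n$, which is the second assertion of the theorem.

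For the weak inequalities in the first assertion I would pass from the generic case by continuity. Replace $\mathbf{D}$ by $\mathbf{D}+\varepsilon\,\mathrm{diag}(1,2,\ldots,n)$ and replace any zero entry of $\mathbf{v}$ by $\varepsilon$, then renormalize so that $\|\mathbf{v}\|_2=1$; for all sufficiently small $\varepsilon>0$ the perturbed problem is generic, hence strictly interlaced. Because the eigenvalues of a symmetric matrix depend continuously on its entries, sending $\varepsilon\to 0^+$ preserves each of the inequalities $\tilde d_k\le d_k$ and $d_k\le \tilde d_{k+1}$, delivering the claimed weak interlacing.

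The main obstacle is the sign bookkeeping at the poles of $f$: I must verify that on each finite interval $(d_{k-1},d_k)$ the one-sided limits of $f$ have opposite signs (this is where the hypothesis $\rho<0$ enters in a decisive way, by flipping the sign of the residues), and I must confirm that no root of $f$ is spuriously cancelled by a vanishing of $\det(\mathbf{D}-\lambda\mathbf{I})$, so that every root of $f$ truly corresponds to an eigenvalue of $\mathbf{C}$. Once that pole analysis is in hand, the monotonicity argument and the continuity passage are routine.
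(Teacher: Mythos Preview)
Your argument via the secular equation is the classical one and is correct as written; the monotonicity of $f$ for $\rho<0$, the pole analysis on each subinterval, and the continuity passage to recover the weak inequalities are all sound. Note, however, that the paper does not supply its own proof of this theorem: it is quoted as a known result from \cite{bunch1978rank} and \cite{wilkinson1965algebraic}, so there is no in-paper argument to compare against. Your proof is in fact essentially the proof given in those references.
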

\begin{corollary}\label{thm2}
By using the notations in \thmref{thm1}, the eigenvector of $\mathbf{P}$ associated with eigenvalue $\tilde{s}_i$ can be calculated by
\begin{equation}
(\mathbf{S}-\tilde{s}_i\mathbf{I})^{-1}\mathbf{u}.
\end{equation}
\end{corollary}

By \thmref{thm1}, we know that the eigenvalues of a DPR1 matrix interlace with the eigenvalues of the original diagonal matrix. A linear combination of the eigenvectors of the corresponding adjacency matrix is then used to compute the eigenvector representing the largest eigenvalue of a modularity matrix.

According to the notation in \sref{sec1}, because $\mathbf{A}$ is an adjacency matrix, it is symmetric and is therefore orthogonally similar to a diagonal matrix. It follows that there exists an orthogonal matrix $\mathbf{U}$ and a diagonal matrix $\mathbf{\Sigma_\mathbf{A}}$ such that 
$$
\mathbf{A}=\mathbf{U}\mathbf{\Sigma_\mathbf{A}}\mathbf{U}^T.
$$
Suppose the rows and columns of $\mathbf{A}$ are ordered such that
$$
\mathbf{\Sigma_\mathbf{A}}=\operatorname{diag}(\sigma_1, \sigma_2, \cdots, \sigma_n),
$$
where $\sigma_1\ge\sigma_2\ge\cdots\ge\sigma_n$. Let $\mathbf{U}=\begin{pmatrix}\mathbf{u}_1&\mathbf{u}_2&\cdots&\mathbf{u}_n\end{pmatrix}$. Similarly, since a modularity matrix $\mathbf{M}$ is symmetric, it is orthogonally similar to a diagonal matrix. Suppose the eigenvalues of $\mathbf{M}$ are $\beta_1\ge\beta_2\ge\cdots\ge\beta_n$.

\begin{theorem}\label{thm3}
Suppose $\beta_1\ne\sigma_1$, $\beta_1\ne\sigma_2$, and $|\beta_1-\sigma_2|=\Delta$. The eigenvector corresponding to the largest eigenvalue of $\mathbf{M}$ is given by
\begin{equation}
\frac{1}{\|\mathbf{U}^T\mathbf{d}\|_2}\sum_{i=1}^n\frac{\mathbf{u}_i^T\mathbf{d}}{\sigma_i-(\sigma_2+\Delta)}\mathbf{u}_i,
\end{equation}
where $\mathbf{d}$ is defined in \eref{expressd}.
\end{theorem}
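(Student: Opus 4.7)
The plan is to change basis to the eigenbasis of $\mathbf{A}$ so that $\mathbf{B}$ becomes a diagonal plus rank-one (DPR1) matrix, after which Theorem~\ref{thm1} and Corollary~\ref{thm2} can be invoked directly.

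Using $\mathbf{A}=\mathbf{U}\mathbf{\Sigma_\mathbf{A}}\mathbf{U}^T$, the first step is to compute
$$\mathbf{U}^T\mathbf{B}\mathbf{U}=\mathbf{\Sigma_\mathbf{A}}-\frac{(\mathbf{U}^T\mathbf{d})(\mathbf{U}^T\mathbf{d})^T}{2m}=\mathbf{\Sigma_\mathbf{A}}+\rho\mathbf{v}\mathbf{v}^T,$$
where $\mathbf{v}=\mathbf{U}^T\mathbf{d}/\|\mathbf{U}^T\mathbf{d}\|_2$ is a unit vector and $\rho=-\|\mathbf{U}^T\mathbf{d}\|_2^2/(2m)<0$. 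This is precisely the form required by Theorem~\ref{thm1}.

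The next step is to pin down the location of $\beta_1$. Since $\rho<0$, the interlacing inequalities of Theorem~\ref{thm1}, translated from its ascending ordering to the descending ordering used in this paper, place the largest eigenvalue of $\mathbf{U}^T\mathbf{B}\mathbf{U}$ in the top interval: $\sigma_2\le\beta_1\le\sigma_1$. The hypotheses $\beta_1\ne\sigma_1$ and $\beta_1\ne\sigma_2$ make both inequalities strict, so $\Delta=\beta_1-\sigma_2>0$ and hence $\beta_1=\sigma_2+\Delta$. Since $\beta_1>\sigma_2\ge\sigma_i$ for $i\ge 3$, one also has $\beta_1\ne\sigma_i$ for every $i$, so $(\mathbf{\Sigma_\mathbf{A}}-\beta_1\mathbf{I})^{-1}$ exists, and Corollary~\ref{thm2} yields $(\mathbf{\Sigma_\mathbf{A}}-\beta_1\mathbf{I})^{-1}\mathbf{v}$ as the eigenvector of $\mathbf{U}^T\mathbf{B}\mathbf{U}$ associated with $\beta_1$.

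Finally, transforming back to the original basis by left-multiplication by $\mathbf{U}$ and expanding in the orthonormal columns $\mathbf{u}_i$ produces
$$\mathbf{U}(\mathbf{\Sigma_\mathbf{A}}-\beta_1\mathbf{I})^{-1}\mathbf{v}=\frac{1}{\|\mathbf{U}^T\mathbf{d}\|_2}\sum_{i=1}^n\frac{\mathbf{u}_i^T\mathbf{d}}{\sigma_i-\beta_1}\mathbf{u}_i,$$
and substituting $\beta_1=\sigma_2+\Delta$ gives exactly the stated formula. The only delicate point is the ordering bookkeeping needed to verify that $\beta_1$ sits in the uppermost interlacing interval $[\sigma_2,\sigma_1]$; once this is settled, the proof reduces to a one-line application of Corollary~\ref{thm2} followed by unwinding the change of basis.
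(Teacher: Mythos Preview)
Your proposal is correct and follows essentially the same route as the paper: conjugate by $\mathbf{U}$ to obtain the DPR1 matrix $\mathbf{\Sigma_\mathbf{A}}+\rho\mathbf{v}\mathbf{v}^T$, use interlacing to place $\beta_1\in(\sigma_2,\sigma_1)$ so that $\beta_1=\sigma_2+\Delta$, apply Corollary~\ref{thm2}, and pull back through $\mathbf{U}$. Your explicit check that $\beta_1\ne\sigma_i$ for all $i$ (so the inverse exists) is a small but useful addition over the paper's presentation.
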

\begin{proof}
Since $\mathbf{M}=\mathbf{A}-\mathbf{d}\mathbf{d}^T/(2m)$, we have
\begin{equation}
\begin{split}
\mathbf{M} & =\mathbf{A}-\frac{\mathbf{d}\mathbf{d}^T}{2m}\\
& =\mathbf{U}\mathbf{\Sigma_\mathbf{A}}\mathbf{U}^T-\frac{\mathbf{d}\mathbf{d}^T}{2m}\\
&=\mathbf{U}(\mathbf{\Sigma_\mathbf{A}}+\rho\mathbf{y}\mathbf{y}^T)\mathbf{U}^T,
\end{split}
\end{equation}
where 
$$\mathbf{y}=\frac{\mathbf{U}^T\mathbf{d}}{\|\mathbf{U}^T\mathbf{d}\|_2}$$
and 
$$\rho=-\frac{\|\mathbf{U}^T\mathbf{d}\|_2^2}{2m}.$$ Since $\mathbf{\Sigma_\mathbf{A}}+\rho\mathbf{y}\mathbf{y}^T$ is also symmetric, it is orthogonally similar to a diagonal matrix. So we have 
$$
\mathbf{M}=\mathbf{U}\mathbf{V}\mathbf{\Sigma_\mathbf{M}}\mathbf{V}^T\mathbf{U}^T,
$$
where $\mathbf{V}$ is orthogonal and $\mathbf{\Sigma_\mathbf{M}}$ is diagonal. Since $\mathbf{\Sigma_\mathbf{A}}+\rho\mathbf{y}\mathbf{y}^T$ is a DPR1 matrix, $\rho<0$ and $\|\mathbf{y}\|_2=1$, the interlacing theorem applies to the eigenvalues of $\mathbf{A}$ and $\mathbf{M}$. More specifically, we have
\begin{equation}
\beta_n\le\sigma_n\le\beta_{n-1}\le\sigma_{n-1}\le\cdots\le\beta_2\le\sigma_2<\beta_1<\sigma_1.
\end{equation}
The strict inequalities hold because $\beta_1\ne\sigma_1$ and $\beta_1\ne\sigma_2$. Thus $|\beta_1-\sigma_2|=\Delta$ implies $\beta_1-\sigma_2=\Delta$. 
Next, let 
$$\mathbf{M}_1=\mathbf{\Sigma_\mathbf{A}}+\rho\mathbf{y}\mathbf{y}^T.$$ 
Since $\mathbf{M}=\mathbf{U}\mathbf{M}_1\mathbf{U}^T$, we have $\mathbf{M}\mathbf{U}=\mathbf{U}\mathbf{M}_1$. Suppose $(\lambda,\mathbf{v})$ is an eigenpair of $\mathbf{M}_1$, then
$$
\mathbf{M}\mathbf{U}\mathbf{v}=\mathbf{U}\mathbf{M}_1\mathbf{v}=\lambda\mathbf{U}\mathbf{v}
$$
implies that $(\lambda,\mathbf{v})$ is an eigenpair of $\mathbf{M}_1$ if and only if $(\lambda,\mathbf{Uv})$ is an eigenpair of $\mathbf{M}$. By \cref{thm2}, the eigenvector of $\mathbf{M}_1$ corresponding to $\beta_1$ is given by
\begin{equation}
\begin{split}
\mathbf{v}_1&=(\mathbf{\Sigma_{\mathbf{A}}}-\beta_1\mathbf{I})^{-1}\mathbf{y}\\
&=(\mathbf{\Sigma_{\mathbf{A}}}-(\sigma_2+\Delta)\mathbf{I})^{-1}\frac{\mathbf{U}^T\mathbf{d}}{\|\mathbf{U}^T\mathbf{d}\|_2},
\end{split}
\end{equation}
and hence the eigenvector of $\mathbf{M}$ corresponding to $\beta_1$ is given by
\begin{equation}
\begin{split}
\mathbf{m}_1&=\mathbf{Uv}_1\\
&=\mathbf{U}(\mathbf{\Sigma_{\mathbf{A}}}-(\sigma_2+\Delta)\mathbf{I})^{-1}\frac{\mathbf{U}^T\mathbf{d}}{\|\mathbf{U}^T\mathbf{d}\|_2}\\
&=\frac{1}{\|\mathbf{U}^T\mathbf{d}\|_2}\sum_{i=1}^n\frac{\mathbf{u}_i^T\mathbf{d}}{\sigma_i-(\sigma_2+\Delta)}\mathbf{u}_i.
\end{split}
\end{equation}
\end{proof}

The aim of \thmref{thm3} is to demonstrate that the vector $\mathbf{b}_1$ is a linear combination of the $\mathbf{u}_i$. Let
\begin{equation}\label{expressgammai}
\gamma_i=\frac{\mathbf{u}_i^T\mathbf{d}}{(\sigma_i-\beta_1)\|\mathbf{U}^T\mathbf{d}\|_2},
\end{equation}
the next theorem is intended to approximate $\mathbf{m}_1$, the eigenvector corresponding to the largest eigenvalue of $\mathbf{M}$, by a linear combination of $\mathbf{u}_i$ that has the largest $|\gamma_i|$, and to measure how good the approximation is by calculating the norm between $\mathbf{m}_1$ and its approximation.

\begin{theorem}\label{thm4}
With the notations and assumptions in \thmref{thm3} , and let $\gamma_i$ has the expression in \eref{expressgammai}. Suppose $i_k\in \{1,2,\cdots,n\}$, and $\gamma_{i}$ are reordered such that 
$$|\gamma_{i_n}|\le|\gamma_{i_{n-1}}|\le\cdots\le|\gamma_{i_1}|.$$ 
Then given $p\in\{1,2,\cdots,n\}$, $\mathbf{m}_1$ can be approximated by 
$$\hat{\mathbf{m}}_1 = \sum_{j=1}^p\gamma_{i_j}\mathbf{u}_{i_j},$$
with relative error
$$\frac{1}{q}\Big(\sum_{j=p+1}^n\gamma_{i_j}^2\Big)^{\frac{1}{2}},$$
where $q$ is the 2-norm of the vector $\mathbf{m}_1$.
\end{theorem}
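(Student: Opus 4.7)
The plan is to extract the exact expansion of $\mathbf{b}_1$ in the orthonormal basis $\{\mathbf{u}_i\}$ that is already provided by Theorem \ref{thm3}, truncate to the $p$ coefficients with largest modulus, and use orthonormality to evaluate the norm of the tail. Concretely, the formula in Theorem \ref{thm3} rewrites as $\mathbf{b}_1=\sum_{i=1}^n\gamma_i\mathbf{u}_i$ with precisely the $\gamma_i$ defined in the present statement (using $\beta_1=\sigma_2+\Delta$), so the expansion coefficient of $\mathbf{b}_1$ along the orthonormal direction $\mathbf{u}_i$ is exactly $\gamma_i$.

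Applying the permutation $i\mapsto i_j$ that sorts the $|\gamma_i|$ in decreasing order leaves the identity intact: $\mathbf{b}_1=\sum_{j=1}^n\gamma_{i_j}\mathbf{u}_{i_j}$. Subtracting the proposed approximation $\mathbf{v}=\sum_{j=1}^p\gamma_{i_j}\mathbf{u}_{i_j}$ yields the tail $\mathbf{b}_1-\mathbf{v}=\sum_{j=p+1}^n\gamma_{i_j}\mathbf{u}_{i_j}$. Because the columns of the orthogonal matrix $\mathbf{U}$ are orthonormal, any reindexing of them is also orthonormal, and so the Pythagorean identity gives $\|\mathbf{b}_1-\mathbf{v}\|_2^2=\sum_{j=p+1}^n\gamma_{i_j}^2$. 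Dividing by $q=\|\mathbf{b}_1\|_2$ immediately produces the claimed relative error.

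There is no real obstacle here; the argument is essentially a one-line Parseval-type calculation once Theorem \ref{thm3} is available. The only point that merits a brief sanity check is that the permutation of indices does not affect the norm of the tail, which is automatic from orthonormality. One might additionally remark (though the statement does not demand it) that selecting the $p$ coefficients of largest magnitude is optimal in the $2$-norm sense among all $p$-term approximations in this basis, since discarding the smallest coefficients minimizes the sum of squares in the tail; this gives a partial justification for the approximation strategy beyond the bookkeeping of the error formula.
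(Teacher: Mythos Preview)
Your proposal is correct and mirrors the paper's proof essentially line for line: expand $\mathbf{b}_1$ in the orthonormal basis $\{\mathbf{u}_i\}$ via Theorem~\ref{thm3}, reindex, subtract the truncation, and apply orthonormality to compute the $2$-norm of the tail. Your additional remark on the $2$-norm optimality of keeping the $p$ largest coefficients is not in the paper but is a valid and natural complement.
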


\begin{proof}
Since 
$$\gamma_i=\frac{\mathbf{u}_i^T\mathbf{d}}{(\sigma_i-\beta_1)\|\mathbf{U}^T\mathbf{d}\|_2},$$
the vector $\mathbf{m}_1$ can be written as
$$\mathbf{m}_1=\sum_{i=1}^n\gamma_i\mathbf{u}_i=
\sum_{j=1}^n\gamma_{i_j}\mathbf{u}_{i_j}.$$
So if 
$$\hat{\mathbf{m}}_1=\sum_{j=1}^p\gamma_{i_j}\mathbf{u}_{i_j}, p\le n$$
is an approximation of $\mathbf{m}_1$, then the difference between $\mathbf{m}_1$ and its approximation is
$$\mathbf{m}_1-\hat{\mathbf{m}}_1=\sum_{j=p+1}^n\gamma_{i_j}\mathbf{u}_{i_j},$$
and the 2-norm of $\mathbf{m}_1-\hat{\mathbf{m}}_1$ is 
$$\|\mathbf{m}_1-\hat{\mathbf{m}}_1\|_2=\left\Vert\sum_{j=p+1}^n\gamma_{i_j}\mathbf{u}_{i_j}\right\Vert_2=\Big(\sum_{j=p+1}^n\gamma_{i_j}^2\Big)^{\frac{1}{2}},$$
because the $\mathbf{u}_i$ are orthonormal. So if $q$ is the 2-norm of the vector $\mathbf{m}_1$, then the relative error of the approximation is
$$\frac{\|\mathbf{m}_1-\hat{\mathbf{m}}_1\|_2}{\|\mathbf{m}_1\|}=\frac{1}{q}\Big(\sum_{j=p+1}^n\gamma_{i_j}^2\Big)^{\frac{1}{2}}.$$
\end{proof}
We can use the error provided in \thmref{thm4} to gauge the number of terms we will need to approximate the dominant eigenvector of the modularity matrix with eigenvectors of the adjacency matrix to achieve a given level of accuracy.

\section{Normalized Adjacency and Modularity Clustering}\label{sec3}
In parallel to the previous analysis, we will show that the eigenvectors corresponding to the largest eigenvalues of a normalized adjacency matrix and a normalized modularity matrix will produce the same clustering results. Bolla \cite{bolla2011penalized} mentions a similar statement without a complete proof, but Yu and Ding \cite{yu2010network} consider it from a different angle.

Suppose $\mathbf{A}$ is an adjacency matrix, and $$\mathbf{A}_{sym}=\mathbf{D}^{-\frac{1}{2}}\mathbf{AD}^{-\frac{1}{2}}$$
is the corresponding normalized adjacency matrix. Let $$\mathbf{L}=\mathbf{D}-\mathbf{A}$$
be the unnormalized Laplacian matrix and $$\mathbf{L}_{sym}=\mathbf{D}^{-\frac{1}{2}}\mathbf{L}\mathbf{D}^{-\frac{1}{2}}=\mathbf{I}-\mathbf{A}_{sym}$$
be the normalized Laplacian matrix. Finally let $\mathbf{M}$ be the unnormalized modularity matrix defined in \sref{sec1}, $$\mathbf{P}=\frac{\mathbf{d}\mathbf{d}^T}{2m},$$ 
and 
$$\mathbf{M}_{sym}=\mathbf{D}^{-\frac{1}{2}}\mathbf{M}\mathbf{D}^{-\frac{1}{2}}$$
be the normalized modularity matrix. A theorem is first stated, followed by its proof.

\begin{theorem}\label{thm6}
Suppose that zero is a simple eigenvalue of $\mathbf{M}_{sym}$, and one is a simple eigenvalue of $\mathbf{A}_{sym}$. If $\lambda\ne 0$ and $\lambda\ne 1$, then $(\lambda, \mathbf{u})$ is an eigenpair of $\mathbf{A}_{sym}$ if and only if $(\lambda, \mathbf{u})$ is an eigenpair of $\mathbf{M}_{sym}$.
\end{theorem}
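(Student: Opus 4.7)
The plan is to exhibit $\mathbf{B}_{sym}$ as a rank-one perturbation of $\mathbf{A}_{sym}$ by the same vector $\mathbf{w}$ whose roles as eigenvector for eigenvalue $1$ of $\mathbf{A}_{sym}$ and for eigenvalue $0$ of $\mathbf{B}_{sym}$ are already recorded in Section~1. The simplicity hypotheses then pin down the eigenspaces so that any eigenvector with eigenvalue outside $\{0,1\}$ must be orthogonal to $\mathbf{w}$, and on the orthogonal complement of $\mathbf{w}$ the two matrices agree.

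Concretely, the first step is the identity
$$\mathbf{B}_{sym}=\mathbf{D}^{-1/2}\mathbf{A}\mathbf{D}^{-1/2}-\frac{1}{2m}\mathbf{D}^{-1/2}\mathbf{d}\mathbf{d}^T\mathbf{D}^{-1/2}=\mathbf{A}_{sym}-\mathbf{w}\mathbf{w}^T,$$
where $\mathbf{w}=\mathbf{D}^{1/2}\mathbf{e}/\sqrt{2m}$ because $\mathbf{D}^{-1/2}\mathbf{d}=\mathbf{D}^{1/2}\mathbf{e}$ and $\|\mathbf{D}^{1/2}\mathbf{e}\|_2^2=\mathbf{e}^T\mathbf{d}=2m$. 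From the observation in Section~1 that $(0,\mathbf{D}^{1/2}\mathbf{e})$ is an eigenpair of both $\mathbf{L}_{sym}$ and $\mathbf{B}_{sym}$, together with $\mathbf{L}_{sym}=\mathbf{I}-\mathbf{A}_{sym}$, I record the two key facts $\mathbf{A}_{sym}\mathbf{w}=\mathbf{w}$ and $\mathbf{B}_{sym}\mathbf{w}=\mathbf{0}$.

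For the forward direction, assume $\mathbf{A}_{sym}\mathbf{u}=\lambda\mathbf{u}$ with $\lambda\neq 1$. Because $\mathbf{A}_{sym}$ is symmetric and $1$ is a simple eigenvalue whose eigenspace is spanned by $\mathbf{w}$, the spectral theorem gives $\mathbf{w}^T\mathbf{u}=0$. Substituting into the rank-one identity yields
$$\mathbf{B}_{sym}\mathbf{u}=\mathbf{A}_{sym}\mathbf{u}-\mathbf{w}(\mathbf{w}^T\mathbf{u})=\lambda\mathbf{u}.$$
The reverse direction is symmetric: if $\mathbf{B}_{sym}\mathbf{u}=\lambda\mathbf{u}$ with $\lambda\neq 0$, then simplicity of $0$ for the symmetric matrix $\mathbf{B}_{sym}$ forces $\mathbf{w}^T\mathbf{u}=0$, and $\mathbf{A}_{sym}\mathbf{u}=\mathbf{B}_{sym}\mathbf{u}+\mathbf{w}(\mathbf{w}^T\mathbf{u})=\lambda\mathbf{u}$.

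There is no real obstacle here; the only point requiring care is justifying $\mathbf{w}^T\mathbf{u}=0$, which is where both simplicity hypotheses are used. The exclusions $\lambda\neq 0$ and $\lambda\neq 1$ in the theorem statement are exactly what guarantees that $\mathbf{u}$ lies in the orthogonal complement of the shared distinguished direction $\mathbf{w}$, so that the rank-one correction $\mathbf{w}\mathbf{w}^T$ acts trivially.
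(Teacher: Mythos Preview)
Your proof is correct and rests on the same core idea as the paper's: the difference between the two matrices is the rank-one projector onto the distinguished direction $\mathbf{D}^{1/2}\mathbf{e}$, and the simplicity hypotheses force any eigenvector with eigenvalue outside $\{0,1\}$ to be orthogonal to that direction, so the rank-one term vanishes.

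The only organizational difference is that the paper routes the argument through $\mathbf{L}_{sym}$: it first records the trivial equivalence $\mathbf{A}_{sym}=\mathbf{I}-\mathbf{L}_{sym}$ (Observation~\ref{obs1}), then proves a lemma relating eigenpairs of $\mathbf{L}_{sym}$ and $\mathbf{B}_{sym}$ via the identity $\mathbf{B}_{sym}+\mathbf{L}_{sym}=\mathbf{I}-\mathbf{E}$ with $\mathbf{E}=\mathbf{D}^{-1/2}\mathbf{P}\mathbf{D}^{-1/2}$. Your matrix $\mathbf{w}\mathbf{w}^T$ is exactly the paper's $\mathbf{E}$, and your identity $\mathbf{B}_{sym}=\mathbf{A}_{sym}-\mathbf{w}\mathbf{w}^T$ is what one gets by substituting $\mathbf{L}_{sym}=\mathbf{I}-\mathbf{A}_{sym}$ into the paper's. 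So you have collapsed the paper's two-step detour into a single direct comparison, which is slightly cleaner but not a genuinely different method.
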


This theorem may be proven by combining the following two observations. As the second observation requires more lines of explanation, we write it as a lemma.

\begin{observation}\label{obs1}
$(\lambda, \mathbf{u})$ is an eigenpair of $\mathbf{L}_{sym}$ if and only if $(1-\lambda, \mathbf{u})$ is an eigenpair of $\mathbf{A}_{sym}$ because

$$\mathbf{L}_{sym}\mathbf{u}=\lambda \mathbf{u}$$
$$\Longleftrightarrow(\mathbf{I}-\mathbf{A}_{sym})\mathbf{u}=\lambda \mathbf{u}$$
$$\Longleftrightarrow \mathbf{A}_{sym}\mathbf{u}=(1-\lambda)\mathbf{u}.$$
\end{observation}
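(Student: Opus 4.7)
The plan is to exploit the defining identity $\mathbf{L}_{sym} = \mathbf{I} - \mathbf{A}_{sym}$, which is recorded at the start of Section 3 and which follows immediately from $\mathbf{L} = \mathbf{D} - \mathbf{A}$ after sandwiching by $\mathbf{D}^{-1/2}$ on both sides. Once this linear relation between the two matrices is in hand, the observation reduces to a one-line algebraic manipulation, and the whole argument is really a chain of equivalences rather than a proof with substantive content.

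Concretely, I would begin by assuming $\mathbf{L}_{sym}\mathbf{u} = \lambda\mathbf{u}$, substitute $\mathbf{L}_{sym} = \mathbf{I} - \mathbf{A}_{sym}$ on the left to obtain $\mathbf{u} - \mathbf{A}_{sym}\mathbf{u} = \lambda\mathbf{u}$, and then rearrange by subtracting $\mathbf{u}$ from both sides and negating to get $\mathbf{A}_{sym}\mathbf{u} = (1-\lambda)\mathbf{u}$. Each of these manipulations is invertible, so the implications go both ways; this handles both directions of the \emph{if and only if} simultaneously and yields the eigenpair correspondence $(\lambda,\mathbf{u}) \leftrightarrow (1-\lambda,\mathbf{u})$ without needing a separate converse argument.

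There is essentially no obstacle in this argument; the only point worth emphasizing is that the correspondence is at the level of full eigenpairs, not merely spectra, because the identity between $\mathbf{L}_{sym}$ and $\mathbf{A}_{sym}$ is a pointwise matrix equation that preserves the eigenvector $\mathbf{u}$ unchanged. This is what makes the observation useful downstream: the eigenvector of $\mathbf{L}_{sym}$ associated with a given eigenvalue is literally the same vector as the eigenvector of $\mathbf{A}_{sym}$ associated with its $\lambda \mapsto 1-\lambda$ image, with no change of basis required, which is exactly the hook needed to link normalized adjacency clustering to normalized Laplacian clustering en route to Theorem \ref{thm6}.
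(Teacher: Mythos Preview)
Your proposal is correct and follows exactly the paper's approach: the observation is proved in-line by the displayed chain of equivalences, which is precisely the substitution of $\mathbf{L}_{sym} = \mathbf{I} - \mathbf{A}_{sym}$ followed by the trivial rearrangement you describe. There is nothing further to add.
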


\begin{lemma}\label{obs2}
Suppose that $0$ is a simple eigenvalue of both $\mathbf{L}_{sym}$ and $\mathbf{M}_{sym}$. It follows that if $\lambda\ne 0$ and $(\lambda, \mathbf{u})$ is an eigenpair of $\mathbf{L}_{sym}$, then $(1-\lambda, \mathbf{u})$ is an eigenpair of $\mathbf{M}_{sym}$. If $\alpha\ne 0$ and $(\alpha, \mathbf{v})$ is an eigenpair of $\mathbf{M}_{sym}$, then $(1-\alpha, \mathbf{v})$ is an eigenpair of $\mathbf{L}_{sym}$.
\end{lemma}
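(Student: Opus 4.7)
The plan is to reduce everything to the simple observation that the rank-one piece separating $\mathbf{B}_{sym}$ from $\mathbf{A}_{sym}$ acts trivially on any eigenvector orthogonal to a specific common null vector, and then invoke Observation~\ref{obs1} to translate between $\mathbf{L}_{sym}$ and $\mathbf{A}_{sym}$.

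First I would set $\mathbf{w}=\mathbf{D}^{1/2}\mathbf{e}$ and rewrite the rank-one correction. Since $\mathbf{d}=\mathbf{D}\mathbf{e}$, we have $\mathbf{D}^{-1/2}\mathbf{d}=\mathbf{w}$, and $2m=\mathbf{e}^T\mathbf{d}=\|\mathbf{w}\|_2^2$, so
$$\mathbf{B}_{sym}=\mathbf{D}^{-1/2}\Bigl(\mathbf{A}-\tfrac{\mathbf{d}\mathbf{d}^T}{2m}\Bigr)\mathbf{D}^{-1/2}=\mathbf{A}_{sym}-\frac{\mathbf{w}\mathbf{w}^T}{\|\mathbf{w}\|_2^2}.$$
This puts $\mathbf{B}_{sym}$ and $\mathbf{A}_{sym}$ in a form that differs only by a rank-one symmetric term aligned with $\mathbf{w}$.

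Next I would verify that $\mathbf{w}$ is a zero eigenvector of both $\mathbf{B}_{sym}$ and $\mathbf{L}_{sym}$ (the latter is stated in Section~1; for the former, a direct check gives $\mathbf{A}_{sym}\mathbf{w}=\mathbf{w}$, hence $\mathbf{B}_{sym}\mathbf{w}=\mathbf{w}-\mathbf{w}=\mathbf{0}$). Since both matrices are real symmetric and, by hypothesis, $0$ is a simple eigenvalue of each, any eigenvector of $\mathbf{L}_{sym}$ or $\mathbf{B}_{sym}$ belonging to a nonzero eigenvalue must be orthogonal to $\mathbf{w}$.

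Now the two implications follow quickly. For the forward direction, suppose $\lambda\ne 0$ and $\mathbf{L}_{sym}\mathbf{u}=\lambda\mathbf{u}$. By Observation~\ref{obs1}, $\mathbf{A}_{sym}\mathbf{u}=(1-\lambda)\mathbf{u}$, and by the orthogonality remark $\mathbf{w}^T\mathbf{u}=0$, so
$$\mathbf{B}_{sym}\mathbf{u}=\mathbf{A}_{sym}\mathbf{u}-\frac{\mathbf{w}(\mathbf{w}^T\mathbf{u})}{\|\mathbf{w}\|_2^2}=(1-\lambda)\mathbf{u}.$$
For the reverse direction, suppose $\alpha\ne 0$ and $\mathbf{B}_{sym}\mathbf{v}=\alpha\mathbf{v}$. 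Again orthogonality gives $\mathbf{w}^T\mathbf{v}=0$, so $\mathbf{A}_{sym}\mathbf{v}=\mathbf{B}_{sym}\mathbf{v}=\alpha\mathbf{v}$, and Observation~\ref{obs1} yields $\mathbf{L}_{sym}\mathbf{v}=(1-\alpha)\mathbf{v}$.

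The only subtle step is the orthogonality argument; it relies crucially on the simplicity of the zero eigenvalue, since otherwise an eigenvector belonging to some nonzero eigenvalue of one of the matrices could still have a nontrivial component along $\mathbf{w}$ inside a larger invariant subspace. Everything else is bookkeeping with the identity $\mathbf{d}=\mathbf{D}\mathbf{e}$.
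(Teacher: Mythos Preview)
Your proof is correct and follows essentially the same route as the paper's: both isolate the rank-one term $\mathbf{E}=\mathbf{w}\mathbf{w}^T/\|\mathbf{w}\|_2^2$ and use orthogonality of $\mathbf{u}$ (resp.\ $\mathbf{v}$) to $\mathbf{w}=\mathbf{D}^{1/2}\mathbf{e}$ to make it vanish; the paper packages this via the identity $\mathbf{B}_{sym}+\mathbf{L}_{sym}=\mathbf{I}-\mathbf{E}$ rather than routing through $\mathbf{A}_{sym}$ and Observation~\ref{obs1}, but that is cosmetic. One small remark: your closing comment overstates the role of simplicity---for a real symmetric matrix, eigenvectors belonging to distinct eigenvalues are automatically orthogonal, so $\mathbf{u}\perp\mathbf{w}$ follows from $\lambda\ne 0$ alone, and the paper's argument likewise never actually invokes the simplicity hypothesis.
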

\begin{proof}
For $\mathbf{P}=\mathbf{d}\mathbf{d}^T/(2m)$, it is easy to observe that 
\begin{equation}
\begin{split}
\mathbf{M}_{sym}+\mathbf{L}_{sym}&=\mathbf{D}^{-\frac{1}{2}}(\mathbf{A}-\mathbf{P}+\mathbf{D}-\mathbf{A})\mathbf{D}^{-\frac{1}{2}}\\
&=\mathbf{I}-\mathbf{D}^{-\frac{1}{2}}\mathbf{P}\mathbf{D}^{-\frac{1}{2}}.  
\end{split}
\end{equation}
Let $\mathbf{E}=\mathbf{D}^{-\frac{1}{2}}\mathbf{P}\mathbf{D}^{-\frac{1}{2}}$. If $(\lambda, \mathbf{u})$ is an eigenpair of $\mathbf{L}_{sym}$, we have 
$$\lambda \mathbf{u}=\mathbf{L}_{sym}\mathbf{u}$$
$$\Longrightarrow\lambda \mathbf{u}=(\mathbf{I}-\mathbf{M}_{sym}-\mathbf{E})\mathbf{u}$$
$$\Longrightarrow(1-\lambda)\mathbf{u}=\mathbf{M}_{sym}\mathbf{u}+\mathbf{E}\mathbf{u}.$$

Note that $\mathbf{P}$ is an outer product and $\mathbf{P}\ne \mathbf{0}$, so rank($\mathbf{P}$)=1. Since $\mathbf{E}$ is congruent to $\mathbf{P}$, $\mathbf{E}$ and $\mathbf{P}$ have the same number of positive, negative and zero eigenvalues by Sylvester's law \cite{meyer2000matrix}. Therefore $$\operatorname{rank}(\mathbf{E})=\operatorname{rank}(\mathbf{P})=1.$$ 
To prove $\mathbf{E}\mathbf{u}=\mathbf{0}$, it is sufficient to prove $\mathbf{u}$ is in the nullspace of $\mathbf{E}$. Let $\mathbf{e}$ be the vector such that all its entries are one. Observe that 
\begin{equation}
\begin{split}
\mathbf{E}\cdot \mathbf{D}^{\frac{1}{2}}\mathbf{e}&=\mathbf{D}^{-\frac{1}{2}}\mathbf{P}\mathbf{D}^{-\frac{1}{2}}\mathbf{D}^{\frac{1}{2}}\mathbf{e}\\
&=\mathbf{D}^{-\frac{1}{2}}\frac{\mathbf{d}\mathbf{d}^T}{2m}\mathbf{e}\\
&=\frac{\mathbf{d}^T\mathbf{e}}{2m}(\mathbf{D}^{-\frac{1}{2}}\mathbf{d})\\
&=\mathbf{D}^{-\frac{1}{2}}\mathbf{d},
\end{split}
\end{equation}
because 
$$\mathbf{d}^T\mathbf{e}=\sum_{i=1}^nd_i=2m$$
is the sum of the degrees of all the nodes in the graph. Moreover, because $$\mathbf{D}^{-\frac{1}{2}}\mathbf{d}=\mathbf{D}^{\frac{1}{2}}\mathbf{e},$$ $(1, \mathbf{D}^{\frac{1}{2}}\mathbf{e})$ is an eigenpair of $\mathbf{E}$.
Also observe that 
\begin{equation}
\begin{split}
\mathbf{L}_{sym}\cdot \mathbf{D}^{\frac{1}{2}}\mathbf{e}&=\mathbf{D}^{-\frac{1}{2}}(\mathbf{D}-\mathbf{A})\mathbf{D}^{-\frac{1}{2}}\mathbf{D}^{\frac{1}{2}}\mathbf{e}\\
&=\mathbf{D}^{-\frac{1}{2}}\mathbf{L}\mathbf{e}=\mathbf{0}.  
\end{split}
\end{equation}
Therefore, $(0, \mathbf{D}^{\frac{1}{2}}\mathbf{e})$ is an eigenpair of $\mathbf{L}_{sym}$. Since $\mathbf{u}$ is an eigenvector of $\mathbf{L}_{sym}$ corresponding to a nonzero eigenvalue $\lambda$, we have $\mathbf{u}\perp \mathbf{D}^{\frac{1}{2}}\mathbf{e}$, so $\mathbf{u}$ is in the nullspace of $\mathbf{E}$. This gives $\mathbf{E}\mathbf{u}=\mathbf{0}$ and thus $(1-\lambda)\mathbf{u}=\mathbf{M}_{sym}\mathbf{u}$. Therefore $\lambda \mathbf{u}=\mathbf{L}_{sym}\mathbf{u}\Rightarrow(1-\lambda)\mathbf{u}=\mathbf{M}_{sym}\mathbf{u}$. 

On the other hand, if $(\alpha, \mathbf{v})$ is an eigenpair of $\mathbf{M}_{sym}$, then we have
$$\alpha\mathbf{v}=\mathbf{M}_{sym}\mathbf{v}$$
$$\Longrightarrow\alpha\mathbf{v}=(\mathbf{I}-\mathbf{L}_{sym}-\mathbf{E})\mathbf{v}$$
$$\Longrightarrow\mathbf{L}_{sym}\mathbf{v}+\mathbf{E}\mathbf{v}=(1-\alpha)\mathbf{v}.$$
Observe that 
\begin{equation}
\begin{split}
\mathbf{M}_{sym}\cdot \mathbf{D}^{\frac{1}{2}}\mathbf{e}&=\mathbf{D}^{-\frac{1}{2}}\mathbf{M}\mathbf{D}^{-\frac{1}{2}}\mathbf{D}^{\frac{1}{2}}\mathbf{e}\\
&=\mathbf{D}^{-\frac{1}{2}}\mathbf{M}\mathbf{e}=\mathbf{0}
\end{split}
\end{equation}
because the row sums of $\mathbf{M}$ are all zeros. Therefore, $(0, \mathbf{D}^{\frac{1}{2}}\mathbf{e})$ is an eigenpair of $\mathbf{M}_{sym}$. Since $\mathbf{v}$ is an eigenvector of $\mathbf{M}_{sym}$ corresponding to a nonzero eigenvalue $\alpha$, we have $\mathbf{v}\perp \mathbf{D}^{\frac{1}{2}}\mathbf{e}$, so $\mathbf{v}$ is in the nullspace of $\mathbf{E}$.
This gives $\mathbf{E}\mathbf{v}=\mathbf{0}$ and thus $(1-\alpha)\mathbf{v}=\mathbf{L}_{sym}\mathbf{v}$. Therefore $\alpha \mathbf{v}=\mathbf{M}_{sym}\mathbf{v}\Rightarrow(1-\alpha)\mathbf{v}=\mathbf{L}_{sym}\mathbf{v}$.
\end{proof} 

As a result of \thmref{thm6}, a bijection from the nonzero eigenvalues of $\mathbf{M}_{sym}$ to the nonzero eigenvalues of $\mathbf{A}_{sym}$ can be established, and the order of these eigenvalues is maintained. As zero is always an eigenvalue of $\mathbf{M}_{sym}$, the largest eigenvalue of $\mathbf{B}_{sym}$ is always nonnegative. Newman \cite{newman2006modularity} discusses when $\mathbf{B}$ can have a zero largest eigenvalue. The congruence of $\mathbf{M}_{sym}$ and $\mathbf{M}$ logically implies that if zero is the largest Eigenvalue for $\mathbf{M}$, then it is also the largest Eigenvalue for $\mathbf{B}_{sym}$. Since $(0, \mathbf{D}^{\frac{1}{2}}\mathbf{e})$ is an eigenpair of $\mathbf{M}_{sym}$ and all entries in the vector $\mathbf{D}^{\frac{1}{2}}\mathbf{e}$ are greater than zero, all nodes in the graph will be put into one cluster. We prove below that, for nontrivial cases (i.e. when the largest eigenvalue of $\mathbf{M}$ is not zero), the eigenvectors for the largest eigenvalues of both a normalized adjacency matrix and a normalized modularity matrix are the same, so in nontrivial cases they will give the same clustering results.

\begin{theorem}\label{thm7}
With the assumptions in \thmref{thm6}, and given that zero is not the largest eigenvalue of $\mathbf{M}_{sym}$, the eigenvector corresponding to the largest eigenvalue of $\mathbf{M}_{sym}$ and the eigenvector corresponding to the second largest eigenvalue of $\mathbf{A}_{sym}$ are identical.
\end{theorem}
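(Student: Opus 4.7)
The plan is to leverage the eigenpair correspondence supplied by Theorem \ref{thm6} and then identify which eigenvalue of $\mathbf{A}_{sym}$ corresponds to the largest eigenvalue of $\mathbf{B}_{sym}$.

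First I would pin down the Perron pair of $\mathbf{A}_{sym}$. Exactly as in the computation inside the proof of Lemma \ref{obs2}, one checks $\mathbf{A}_{sym}\mathbf{D}^{1/2}\mathbf{e} = \mathbf{D}^{-1/2}\mathbf{A}\mathbf{e} = \mathbf{D}^{1/2}\mathbf{e}$, so $(1,\mathbf{D}^{1/2}\mathbf{e})$ is an eigenpair of $\mathbf{A}_{sym}$. Since $G$ is connected, $\mathbf{A}_{sym}$ is similar to the irreducible stochastic matrix $\mathbf{D}^{-1}\mathbf{A}$, so by Perron--Frobenius $1$ is the strictly largest eigenvalue of $\mathbf{A}_{sym}$, and by hypothesis it is simple. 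Dually, Lemma \ref{obs2}'s proof already shows $(0,\mathbf{D}^{1/2}\mathbf{e})$ is an eigenpair of $\mathbf{B}_{sym}$; since $0$ is simple in $\mathbf{B}_{sym}$ but is not its largest eigenvalue, the largest eigenvalue $\beta_1$ of $\mathbf{B}_{sym}$ must be strictly positive. Let $\mathbf{b}$ be a corresponding unit eigenvector.

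To invoke Theorem \ref{thm6} on $(\beta_1,\mathbf{b})$ I need $\beta_1 \notin \{0,1\}$. The first holds since $\beta_1>0$. For the second, if $\beta_1 = 1$ then $\mathbf{b}$ would lie in the one-dimensional eigenspace of $\mathbf{A}_{sym}$ for eigenvalue $1$, hence would be parallel to $\mathbf{D}^{1/2}\mathbf{e}$; but the only eigenvalue of $\mathbf{B}_{sym}$ for which $\mathbf{D}^{1/2}\mathbf{e}$ is an eigenvector is $0\ne\beta_1$, a contradiction. An alternative is to note that $\mathbf{A}_{sym}-\mathbf{B}_{sym} = \mathbf{E}$ is positive semidefinite, so $\beta_1 \le 1$ by Weyl's inequality, and $\beta_1 = 1$ is ruled out by the same Perron argument. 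Therefore Theorem \ref{thm6} yields $(\beta_1,\mathbf{b})$ as an eigenpair of $\mathbf{A}_{sym}$ with $0<\beta_1<1$.

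It remains to check that $\beta_1$ is the second largest eigenvalue of $\mathbf{A}_{sym}$. Any eigenvalue $\mu<1$ of $\mathbf{A}_{sym}$ either equals $0$, in which case $\mu<\beta_1$ is immediate, or lies outside $\{0,1\}$, in which case Theorem \ref{thm6} promotes $\mu$ to an eigenvalue of $\mathbf{B}_{sym}$, forcing $\mu\le\beta_1$. Thus $\beta_1$ is the largest eigenvalue of $\mathbf{A}_{sym}$ strictly below $1$, and Theorem \ref{thm6} already identifies $\mathbf{b}$ as the associated eigenvector of $\mathbf{A}_{sym}$. The main technical hurdle is the clean exclusion of $\beta_1\in\{0,1\}$ so that Theorem \ref{thm6} can be applied to the Perron pair of $\mathbf{B}_{sym}$; once that is handled, the remainder is bookkeeping on the two spectra.
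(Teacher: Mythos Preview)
Your argument is correct and follows the same route as the paper: establish that $1$ is the top eigenvalue of $\mathbf{A}_{sym}$, then use Theorem~\ref{thm6} to identify the largest eigenvalue of $\mathbf{B}_{sym}$ with the second largest of $\mathbf{A}_{sym}$. The only difference is cosmetic: the paper obtains ``$1$ is maximal for $\mathbf{A}_{sym}$'' from the positive semidefiniteness of $\mathbf{L}_{sym}$ via Observation~\ref{obs1}, whereas you invoke Perron--Frobenius on $\mathbf{D}^{-1}\mathbf{A}$; and you are more explicit than the paper in checking $\beta_1\neq 1$ before applying Theorem~\ref{thm6} (your justification that $\mathbf{b}$ would land in the $1$-eigenspace of $\mathbf{A}_{sym}$ is exactly Lemma~\ref{obs2} combined with Observation~\ref{obs1}).
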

\begin{proof}
Due to the fact that $\mathbf{L}_{sym}$ is positive semi-definite \cite{von2007tutorial}, zero is the smallest eigenvalue of $\mathbf{L}_{sym}$. Then by \oref{obs1}, one is the largest eigenvalue of $\mathbf{A}_{sym}$. Since all eigenvalues of $\mathbf{A}_{sym}$ that are not equal to one are also the eigenvalues of $\mathbf{M}_{sym}$, it follows that if the simple zero eigenvalue is not the largest eigenvalue of $\mathbf{M}_{sym}$, then the largest eigenvalue of $\mathbf{A}_{sym}$ is the second largest eigenvalue of $\mathbf{M}_{sym}$ and they have the same eigenvectors by \thmref{thm6}.
\end{proof}

Both adjacency clustering and modularity clustering involve calculation of all entries in the adjacency matrices, so they have the same time complexity of $\mathcal{O}(n^2)$. However, as shown in the next section, normalized adjacency clustering can be twice as effective as normalized modularity clustering.

\section{Experiments}\label{sec4}
In this section, synthetic and practical data sets are used to corroborate the theoretical findings presented in the previous sections. Since normalized adjacency clustering and normalized modularity clustering provides the same eigenvalues and eigenvectors, only efficiency is compared in the experiments.

\subsection{Synthetic Data Sets}
Synthetic data sets with observations from $100$ to $10,000$ are created, and for each of the data sets, the number of features is $10$. The experimental results are shown in \fref{synthetic}.

\begin{figure}[ht]
\centering
\includegraphics[width=0.9\columnwidth]{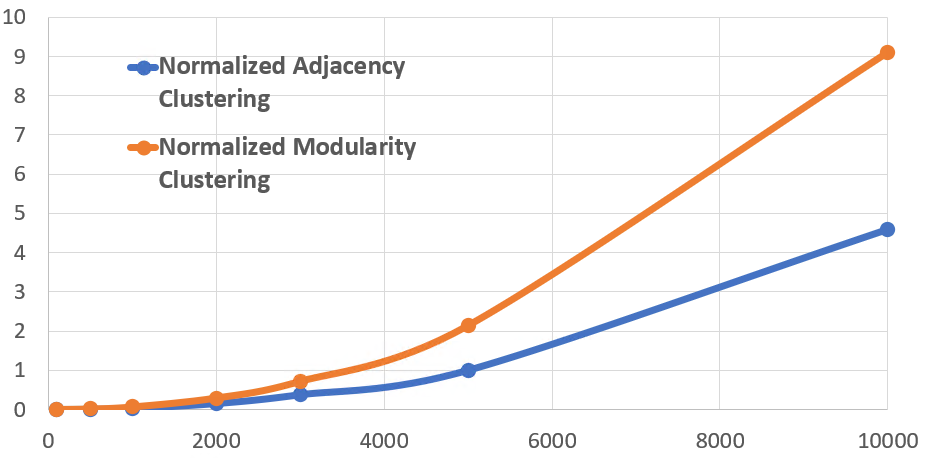}
\caption{The plot of run-time recordings of normalized adjacency clustering and normalized modularity clustering. X-axis is the number of observations, and y-axis is run-time in seconds.}
\label{synthetic}
\end{figure}

From \fref{synthetic}, it can be seen that normalized adjacency clustering (the blue line) is about twice efficient as normalized modularity clustering (the orange line).

\subsection{PenDigit Data Sets from MNIST database}
The PenDigit database is a subset of the MNIST data set \cite{lecun1998gradient,zhang2002large,hertz2004boosting,chitta2012efficient,race2014determining}. A training set of 60,000 handwritten digits from 44 writers is contained in the original data. Each data point is a row vector derived from a grayscale image. The images each have 28 pixels in height and 28 pixels in width, which makes 784 pixels in total. The row vectors contain the label of each digit as well as the lightness of each pixel. A pixel's lightness is represented by a number between 0 and 255 inclusively, with smaller numbers representing lighter pixels. The experiments were conducted using three subsets consisting of $1\&7$, $2\&3$, and $5\&6$. The experimental results are listed in \tref{tab1}.

\begin{table}[h]
\centering
\caption{The plot of run-time recordings (in seconds) of normalized adjacency clustering and normalized modularity clustering on subsets of MNIST data set}\label{tab1}
\begin{tabularx}{\textwidth}{
| >{\raggedright\arraybackslash}X 
| >{\centering\arraybackslash}X
| >{\centering\arraybackslash}X
| >{\centering\arraybackslash}X
|
}
\hline
Data &\#data points& $\mathbf{A}_{sym}$ & $\mathbf{M}_{sym}$\\
\hline
Digit1\&7 & 9085 & 4.0920 & 9.1306 \\
Digit2\&3 & 8528 & 3.5197 & 7.0120 \\
Digit5\&6 & 7932 & 3.0505 & 6.5147 \\
\hline
\end{tabularx}
\end{table}

From \tref{tab1}, it can be seen that the experimental results from real data sets are similar to the ones from synthetic data sets in that normalized adjacency clustering as around twice efficient as normalized modularity clustering.

\section{Conclusion}\label{sec5}
In this article, the exact linear relationship between the leading eigenvector of the unnormalized modularity matrix and the eigenvectors of the adjacency matrix is established. This paper demonstrates that the leading eigenvector of a modularity matrix can be written as a linear combination of the eigenvectors of an adjacency matrix, and the coefficients in the linear combination are deduced. An approximation method for the leading eigenvector of the modularity matrix is then given, along with a calculated relative error. Additionally, when the largest eigenvalue of the modularity matrix is nonzero, the normalized modularity clustering method will give the same results as using the eigenvector corresponding to the smallest eigenvalue of the normalized adjacency matrix. Experimental results indicate that using normalized adjacency clustering can be as twice efficient as normalized modularity clustering.

%
%
%
\bibliographystyle{splncs04}
\bibliography{mybib1}

\begin{thebibliography}{10}
\providecommand{\url}[1]{\texttt{#1}}
\providecommand{\urlprefix}{URL }
\providecommand{\doi}[1]{https://doi.org/#1}

\bibitem{bolla2011penalized}
Bolla, M.: Penalized versions of the newman-girvan modularity and their
  relation to normalized cuts and k-means clustering. Physical Review E
  \textbf{84}(1),  016108 (2011)

\bibitem{bunch1978rank}
Bunch, J.R., Nielsen, C.P., Sorensen, D.C.: Rank-one modification of the
  symmetric eigenproblem. Numerische Mathematik  \textbf{31}(1),  31--48 (1978)

\bibitem{chitta2012efficient}
Chitta, R., Jin, R., Jain, A.K.: Efficient kernel clustering using random
  fourier features. In: Data Mining (ICDM), 2012 IEEE 12th International
  Conference on. pp. 161--170. IEEE (2012)

\bibitem{chung1997spectral}
Chung, F.R.: Spectral graph theory, vol.~92. American Mathematical Soc. (1997)

\bibitem{fiedler1973algebraic}
Fiedler, M.: Algebraic connectivity of graphs. Czechoslovak Mathematical
  Journal  \textbf{23}(2),  298--305 (1973)

\bibitem{fiedler1975property}
Fiedler, M.: A property of eigenvectors of nonnegative symmetric matrices and
  its application to graph theory. Czechoslovak Mathematical Journal
  \textbf{25}(4),  619--633 (1975)

\bibitem{hertz2004boosting}
Hertz, T., Bar-Hillel, A., Weinshall, D.: Boosting margin based distance
  functions for clustering. In: Proceedings of the twenty-first international
  conference on Machine learning. p.~50. ACM (2004)

\bibitem{lecun1998gradient}
LeCun, Y., Bottou, L., Bengio, Y., Haffner, P.: Gradient-based learning applied
  to document recognition. Proceedings of the IEEE  \textbf{86}(11),
  2278--2324 (1998)

\bibitem{meyer2000matrix}
Meyer, C.D.: Matrix analysis and applied linear algebra. Siam (2000)

\bibitem{newman2006modularity}
Newman, M.E.: Modularity and community structure in networks. Proceedings of
  the National Academy of Sciences  \textbf{103}(23),  8577--8582 (2006)

\bibitem{newman2004finding}
Newman, M.E., Girvan, M.: Finding and evaluating community structure in
  networks. Physical review E  \textbf{69}(2),  026113 (2004)

\bibitem{ng2002spectral}
Ng, A.Y., Jordan, M.I., Weiss, Y., et~al.: On spectral clustering: Analysis and
  an algorithm. Advances in neural information processing systems  \textbf{2},
  849--856 (2002)

\bibitem{olson2005single}
Olson, E., Walter, M.R., Teller, S.J., Leonard, J.J.: Single-cluster spectral
  graph partitioning for robotics applications. In: Robotics: Science and
  Systems. pp. 265--272 (2005)

\bibitem{pothen1997graph}
Pothen, A.: Graph partitioning algorithms with applications to scientific
  computing. In: Parallel Numerical Algorithms, pp. 323--368. Springer (1997)

\bibitem{race2014determining}
Race, S.L., Meyer, C., Valakuzhy, K.: Determining the number of clusters via
  iterative consensus clustering. In: Proceedings of the SIAM Conference on
  Data Mining (SDM). pp. 94–--102. SIAM (2013)

\bibitem{shi2000normalized}
Shi, J., Malik, J.: Normalized cuts and image segmentation. Pattern Analysis
  and Machine Intelligence, IEEE Transactions on  \textbf{22}(8),  888--905
  (2000)

\bibitem{tolliver2006graph}
Tolliver, D.A., Miller, G.L.: Graph partitioning by spectral rounding:
  Applications in image segmentation and clustering. In: 2006 IEEE Computer
  Society Conference on Computer Vision and Pattern Recognition (CVPR'06).
  vol.~1, pp. 1053--1060. IEEE (2006)

\bibitem{von2007tutorial}
Von~Luxburg, U.: A tutorial on spectral clustering. Statistics and computing
  \textbf{17}(4),  395--416 (2007)

\bibitem{wilkinson1965algebraic}
Wilkinson, J.H., Wilkinson, J.H., Wilkinson, J.H.: The algebraic eigenvalue
  problem, vol.~87. Clarendon Press Oxford (1965)

\bibitem{yu2010network}
Yu, L., Ding, C.: Network community discovery: solving modularity clustering
  via normalized cut. In: Proceedings of the Eighth Workshop on Mining and
  Learning with Graphs. pp. 34--36. ACM (2010)

\bibitem{zhang2002large}
Zhang, R., Rudnicky, A.I.: A large scale clustering scheme for kernel k-means.
  In: Pattern Recognition, 2002. Proceedings. 16th International Conference on.
  vol.~4, pp. 289--292. IEEE (2002)

\end{thebibliography}

\end{document}